\documentclass[11pt]{article}
\usepackage{fullpage}
\usepackage{amsmath,amsthm,amsfonts,amssymb,dsfont}

\usepackage[round]{natbib}

\usepackage{hyperref,cleveref}

\Crefformat{equation}{#2(#1)#3}
\crefformat{equation}{#2(#1)#3}
\Crefrangeformat{equation}{#3(#1)#4--#5(#2)#6}
\crefrangeformat{equation}{#3(#1)#4--#5(#2)#6}
\Crefmultiformat{equation}{#2(#1)#3}{ and~#2(#1)#3}{, #2(#1)#3}{ and~#2(#1)#3}
\crefmultiformat{equation}{#2(#1)#3}{ and~#2(#1)#3}{, #2(#1)#3}{ and~#2(#1)#3}

\usepackage{enumitem}
\setlist[description]{font=\normalfont}

\usepackage{csquotes}
\MakeOuterQuote{"}

\usepackage{mathtools}
\mathtoolsset{centercolon}
\DeclarePairedDelimiterXPP\ind[1]{\mathds{1}}{\lbrace}{\rbrace}{}{#1}
\DeclarePairedDelimiterX\eval[1]{\lbrace}{\rvert}{#1 \delimsize\rbrace}
\DeclarePairedDelimiter\ip{\langle}{\rangle}
\DeclarePairedDelimiter\abs{\lvert}{\rvert}
\DeclarePairedDelimiter\card{\lvert}{\rvert}

\DeclarePairedDelimiter\del{\lparen}{\rparen}

\DeclarePairedDelimiter\cbr{\lbrace}{\rbrace}
\DeclarePairedDelimiter\set{\lbrace}{\rbrace}

\DeclarePairedDelimiter\intcc{\lbrack}{\rbrack}

\newtheorem{theorem}{Theorem}

\DeclareMathOperator\softmax{softmax}

\newcommand\T{{\scriptscriptstyle{\mathsf{T}}}}
\newcommand\R{\mathbb{R}}
\newcommand\N{\mathbb{N}}

\newcommand\Sigmain{\Sigma_{\operatorname{in}}}
\newcommand\Sigmaout{\Sigma_{\operatorname{out}}}
\newcommand\encode{\psi_{\operatorname{in}}}
\newcommand\decode{\psi_{\operatorname{out}}}
\newcommand\TF{\mathsf{TF}}
\newcommand\SA{\mathsf{SA}}

\def\ddefloop#1{\ifx\ddefloop#1\else\ddef{#1}\expandafter\ddefloop\fi}
\def\ddef#1{\expandafter\def\csname bf#1\endcsname{\ensuremath{\mathbf{#1}}}}
\ddefloop abcdefghijklmnopqrstuvwxyzABCDEFGHIJKLMNOPQRSTUVWXYZ\ddefloop
\def\ddef#1{\expandafter\def\csname bf#1\endcsname{\ensuremath{\boldsymbol{\csname #1\endcsname}}}}
\ddefloop {alpha}{beta}{gamma}{delta}{epsilon}{varepsilon}{zeta}{eta}{theta}{vartheta}{iota}{kappa}{lambda}{mu}{nu}{xi}{pi}{varpi}{rho}{varrho}{sigma}{varsigma}{tau}{upsilon}{phi}{varphi}{chi}{psi}{omega}{Gamma}{Delta}{Theta}{Lambda}{Xi}{Pi}{Sigma}{varSigma}{Upsilon}{Phi}{Psi}{Omega}{ell}\ddefloop
\def\ddef#1{\expandafter\def\csname cal#1\endcsname{\ensuremath{\mathcal{#1}}}}
\ddefloop ABCDEFGHIJKLMNOPQRSTUVWXYZ\ddefloop

\title{One-layer transformers fail to solve the induction heads task}

\author{Clayton Sanford\thanks{Columbia University, \texttt{clayton@cs.columbia.edu}.} \and Daniel Hsu\thanks{Columbia University, \texttt{djhsu@cs.columbia.edu}.} \and Matus Telgarsky\thanks{New York University, \texttt{mjt10041@nyu.edu}.}}

\begin{document}
\maketitle

\begin{abstract}
  A simple communication complexity argument proves that no one-layer transformer can solve the induction heads task unless its size is exponentially larger than the size sufficient for a two-layer transformer.
\end{abstract}

\section{Introduction}

The mechanistic interpretability studies of \citet{elhage2021mathematical} and \citet{olsson2022context} identified the ubiquity and importance of so-called "induction heads" in transformer-based language models~\citep{vaswani2017attention,radford2019language,brown2020language}.
The basic task performed by an induction head is as follows.
\begin{itemize}
  \item The input is an $n$-tuple of tokens $(\sigma_1, \dotsc, \sigma_n)$ from a finite alphabet $\Sigma$.

  \item The output is an $n$-tuple of tokens $(\tau_1, \dotsc, \tau_n)$ from the augmented alphabet $\Sigma \cup \set{\bot}$, where the $i$-th output $\tau_i$ is equal to the input token that immediately follows the rightmost previous occurrence of the input token $\sigma_i$, or $\bot$ if there is no such previous occurrence.
    That is:
    $\tau_i = \bot$ if $\sigma_j \neq \sigma_i$ for all $j < i$, and otherwise
    $\tau_i = \sigma_{j_i+1}$ where $j_i = \max\set{ j : j < i \wedge \sigma_j = \sigma_i }$.
\end{itemize}
\citet{sanford2024transformers} call this the "$1$-hop induction heads task"; they also define and study generalizations of the task with increasing difficulty, which they call "$k$-hop" (for $k \in \N$).
The special case of $2$-hop is related to the function composition task defined and studied by \citet{peng2024limitations}.

\citet{bietti2023birth} gave an explicit construction of a transformer for solving the ($1$-hop) induction heads task.
Their construction is a two-layer transformer with a single attention head in each layer.
They also empirically found it difficult to train one-layer transformers to successfully solve the induction heads task under a certain data generation model, but training two-layer transformers was possible.
Indeed, \citet{elhage2021mathematical} noted: ``In small two-layer attention-only transformers, composition seems to be primarily used for one purpose: the creation of [\ldots] induction heads.''

In this note, we prove that a one-layer transformer cannot solve the induction heads task unless the size of the transformer is very large.
By "size", we mean the product of
the number of self-attention heads $h$,
the embedding dimension $m$, and
the number of bits of precision $p$ used by the transformer.
By "very large", we mean that $hmp = \Omega(n)$, where $n$ is the size of the input.
We note that when $\card{\Sigma} \leq n$, there is a two-layer transformer that solves the induction heads task with $h = O(1)$, $m = O(1)$, and $p = O(\log(n))$~\citep{bietti2023birth,sanford2024transformers}.
So our size lower bound for one-layer transformers is exponentially larger than the size that is sufficient for two-layer transformers.

The proof is based on a simple communication complexity argument.
Lower bounds on the size of one-layer transformers that solve related tasks were given by \citet{sanford2023representational} using similar arguments.
Conditional lower bounds for the $k$-hop (for general $k$, mentioned above) were given by \citet{sanford2024transformers} for $\Omega(\log k)$-layer transformers, assuming the 1-vs-2 cycle conjecture in the Massively Parallel Computation model~\citep{im2023massively}.
\citet{peng2024limitations} prove an average-case lower bound for one-layer transformers to solve their function composition task (which resembles the $2$-hop), again using a communication complexity argument.

\section{Transformer model}

In this section, we give a generic definition of one-layer transformers that allows for general forms of token embeddings and positional encodings.
A \emph{self-attention head} with \emph{embedding dimension} $m \in \N$ is a triple $(Q,K,V)$ where $Q \colon \N \times \R^m \to \R^m$, $K \colon \N \times \R^m \to \R^m$, and $V \colon \N \times \R^m \to \R^m$ are, respectively, called the query, key, and value embedding functions, and the first arguments to $Q, K, V$ enable the commonly-used positional encoding.
The self-attention head defines a mapping $\SA_{Q,K,V} \colon \R^{n \times m} \to \R^{n \times m}$ as follows.
On input $X = [ X_1, \dotsc, X_n ]^\T \in \R^{n \times m}$, the output $Y = [ Y_1, \dotsc, Y_n]^\T = \SA_{Q,K,V}(X) \in \R^{n \times m}$ is defined by
\begin{align*}
  Y_i
  & = \sum_{j=1}^n \alpha_{i,j} V(j,X_j) \\
  \quad \text{where} \quad
  (\alpha_{i,1},\dotsc,\alpha_{i,n})
  & = \softmax\del*{ \ip{Q(i,X_i), K(1,X_1)}, \dotsc, \ip{Q(i,X_i), K(n,X_n)} }
  \\
  & = \frac{\del*{ \exp\del{ \ip{Q(i,X_i), K(1,X_1)} }, \dotsc, \exp\del{ \ip{Q(i,X_i), K(n,X_n)} } }}{\sum_{j=1}^n \exp\del*{ \ip{Q(i,X_i), K(j,X_j)} }}
  .
\end{align*}
A \emph{one-layer transformer} with $h$ self-attention heads and embedding dimension $m$ is a collection of $h$ self-attention heads $(Q_t,K_t,V_t)_{t=1}^h$ each with embedding dimension $m$, together with an input encoding function $\encode \colon \N \times \Sigmain \to \R^m$ and an output decoding function $\decode \colon \N \times \R^m \to \Sigmaout$.
Here, the input alphabet $\Sigmain$ and the output alphabet $\Sigmaout$ are finite sets.
The transformer defines a mapping $\TF_{((Q_t,K_t,V_t)_{t=1}^h,\encode,\decode)} \colon \Sigmain^n \to \Sigmaout^n$ as follows.
On input $\sigma = (\sigma_1,\dotsc,\sigma_n) \in \Sigmain^n$, the output $\tau = (\tau_1,\dotsc,\tau_n) = \TF_{((Q_t,K_t,V_t)_{t=1}^h,\encode,\decode)}(\sigma) \in \Sigmaout^n$ is defined by
\begin{equation*}
  \tau_i = \decode\del*{ i, \encode\del*{ i, X_i} + Z_i }
  \quad \text{where} \quad
  [Z_1,\dotsc,Z_n]^\T = \sum_{t=1}^h \SA_{Q_t,K_t,V_t}\del*{ [\encode(1,X_1),\dotsc,\encode(n,X_n)]^\T } .
\end{equation*}
We say that a transformer uses \emph{$p$ bits of precision} if the outputs of all embedding functions ($Q_t$, $K_t$, $V_t$, $\encode$) and the self-attention heads may be rounded to rational numbers with at most $p$ bits of precision without changing the behavior of the mapping $\TF_{((Q_t,K_t,V_t)_{t=1}^h,\encode,\decode)}$.

\section{Size of one-layer transformers for the induction heads task}

\begin{theorem}
  If a one-layer transformer with $h$ self-attention heads, embedding dimension $m$, and $p$ bits of precision solves the induction heads task for input sequences of length $n$ over a three-symbol alphabet, then $hmp = \Omega(n)$.
\end{theorem}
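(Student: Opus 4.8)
The plan is to prove this by reduction from the one-way communication complexity of the $\mathrm{INDEX}$ problem. In $\mathrm{INDEX}_N$, Alice holds a string $x \in \set{0,1}^N$ and Bob holds an index $k \in \set{1,\dotsc,N}$, and Bob must output $x_k$ after receiving a single message from Alice; any deterministic protocol requires a message of at least $N$ bits, since the message must determine $x_k$ for every $k$ and hence must be an injective function of $x$. I will encode $\mathrm{INDEX}_N$ instances, for $N = \Theta(n)$, as length-$n$ induction-heads instances over the three-symbol alphabet $\Sigma = \set{0,1,\#}$ so that one fixed output coordinate equals $x_k$, and then show that a one-layer transformer solving the induction heads task yields a deterministic one-way protocol in which Alice sends only $O(hmp)$ bits. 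Together these give $hmp = \Omega(n)$. (The structure follows the communication arguments of \citet{sanford2023representational}.)

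For the encoding, assume for simplicity that $n = 2N+1$ (general $n$ is handled by padding the front with $0$'s, which affects nothing below). Given $(x,k)$, let $\sigma$ be defined by $\sigma_{2j} = x_j$ for $j \in \set{1,\dotsc,N}$, $\sigma_{2k-1} = \#$, $\sigma_{2j-1} = 0$ for $j \in \set{1,\dotsc,N}\setminus\set k$, and $\sigma_{2N+1} = \#$. The symbol $\#$ then occurs exactly at positions $2k-1$ and $2N+1$, so the rightmost occurrence of $\#$ strictly before position $n = 2N+1$ is at $2k-1$, and the induction-heads output at coordinate $n$ is $\tau_n = \sigma_{2k} = x_k$. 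Hence, if a one-layer transformer $\TF$ solves the induction heads task for length $n$, then its output at coordinate $n$ equals $x_k$ for every $(x,k)$.

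To extract a cheap protocol, split the positions as $A = \set{2,4,\dotsc,2N}$, assigned to Alice, and $B = \set{1,3,\dotsc,2N+1}$, assigned to Bob: Alice reads off $\sigma_A$ from $x$, Bob reads off $\sigma_B$ from $k$, and the query token $\sigma_n = \#$, hence each query vector $Q_t\del{n,\encode(n,\#)}$, is fixed and known to both. The coordinate-$n$ output of $\TF$ equals $\decode\del{n, \encode(n,\#) + Z_n}$, where $Z_n = \sum_{t=1}^h \bigl(\sum_{j=1}^n e^{z^{(t)}_j} v^{(t)}_j\bigr) / \bigl(\sum_{j=1}^n e^{z^{(t)}_j}\bigr)$, with $z^{(t)}_j$ the attention score and $v^{(t)}_j$ the value vector that head $t$ associates to position $j$ when querying from position $n$; crucially, each sum over $j$ splits as a sum over $A$ plus a sum over $B$. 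Because $\sigma_n$ is fixed, Alice can compute the $2h$ partial sums $\sum_{j \in A} e^{z^{(t)}_j} v^{(t)}_j$ and $\sum_{j \in A} e^{z^{(t)}_j}$ without knowing $k$; she sends approximations of these $h$ vector/scalar pairs to Bob, who adds his own partial sums over $B$, forms $Z_n$, and outputs $\decode\del{n,\encode(n,\#)+Z_n} = x_k$. Since the one-way complexity of $\mathrm{INDEX}_N$ is $N = \Omega(n)$, Alice's message has $\Omega(n)$ bits; provided the message can be taken to have $O(hmp)$ bits, we conclude $hmp = \Omega(n)$.

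The main obstacle is that last proviso. The partial sums are not rational, because the softmax involves exponentials, so Alice must transmit rational approximations, and one must verify that $O(p)$ bits per entry suffice for Bob to reconstruct $Z_n$ accurately enough that $\decode$ returns the correct symbol. I expect this to follow from three facts: rounding the self-attention outputs to $p$ bits does not change $\TF$ (so $Z_n$ is effectively a $p$-bit rational that Bob only needs to pin down to within its own rounding); each head's contribution to $Z_n$ is a convex combination of the value vectors $v^{(t)}_j$ and is hence bounded in magnitude in terms of $p$; and the normalizer $\sum_{j=1}^n e^{z^{(t)}_j}$ is bounded below by its ``self'' term $e^{z^{(t)}_n}$, which both parties compute exactly. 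Carrying out this precision bookkeeping---along with the routine padding and degenerate-case checks---is the only real work; the combinatorial core, namely the encoding above, is immediate.
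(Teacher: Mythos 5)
Your reduction coincides with the paper's: the same pigeonhole lower bound for INDEX, the same interleaved encoding with the marker symbol placed at position $2k-1$ and at the final query position, and the same split of the query-position softmax sums into an Alice part (even positions) and a Bob part (odd positions). The combinatorial core is therefore right, and you correctly identify the precision bookkeeping as the only remaining work.

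That deferred step, however, hides a genuine gap as you have set it up. You propose that Alice transmit the raw partial sums $\sum_{j\in A} e^{z^{(t)}_j} v^{(t)}_j$ and $\sum_{j\in A} e^{z^{(t)}_j}$ using ``$O(p)$ bits per entry.'' The attention scores $z^{(t)}_j$ are inner products of $m$-dimensional vectors with $p$-bit entries and can be as large as $m 2^{2p}$ in magnitude, so these partial sums range over roughly $[e^{-m2^{2p}},\, n\,2^p e^{m2^{2p}}]$; no fixed-point encoding with $O(p)$ (or even $\mathrm{poly}(p)$) bits can represent them to the accuracy Bob needs, and your three supporting facts concern the full normalized attention output rather than these unnormalized partial sums, so they do not close the gap. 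Two repairs exist. A floating-point encoding with an $O(p)$-bit mantissa works, because the accuracy Bob needs for $\hat A$ is measured relative to $Z_A+Z_B\geq Z_A$ while $\abs{A}\leq 2^p Z_A$, so relative precision $2^{-O(p)}$ suffices; but the exponent field costs roughly $p+\log m$ extra bits per coordinate, yielding only $hm(p+\log m)=\Omega(n)$. The paper instead has Alice send the \emph{normalized} quantity $A/Z_A$ --- a convex combination of value vectors, hence bounded by $2^p$ and encodable in $O(p)$ fixed-point bits per coordinate --- together with the single scalar $\log Z_A$; a short error analysis (using the Lipschitzness of $t\mapsto e^{t}/(e^{t}+Z_B)$ in the variable $t=\log Z_A$) then shows Bob recovers the attention output to within $2^{-p}$, which gives the clean $hmp=\Omega(n)$. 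You should either switch to transmitting these normalized quantities or accept the extra $\log m$ in the final bound.
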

\begin{proof}
  We give a reduction from the one-way communication problem INDEX~\citep[Example 4.19]{kushilevitz1997communication}.
  In this problem, Alice is given a bit string $f = (f_1,\dotsc,f_k) \in \{0,1\}^k$, and Bob is given an index $i^* \in [k]$.
  Alice can send a message to Bob, and after receiving it, Bob has to output $f_{i^*}$.
  By the pigeonhole principle, in any protocol for INDEX, Alice must send at least $k$ bits to Bob.

  Suppose there is a one-layer transformer (with $h$ self-attention heads and embedding dimension $m$, using $p$ bits of precision) that solves the induction heads task with a three-symbol alphabet $\Sigma = \set{0,1,?}$ (so $\Sigmain = \Sigma$ and $\Sigmaout = \Sigma \cup \set{\bot}$).
  We show that it specifies a one-way communication protocol for INDEX.
  Consider the input $n$-tuple $\sigma$, with $n=2k+1$, defined by
  \begin{equation*}
    \sigma = (e_1, f_1, e_2, f_2, \dotsc, e_k, f_k, ?) \in \set{0,1,?}^{2k+1}
  \end{equation*}
  where $f_1,\dotsc,f_k$ are taken from Alice's input, and
  \begin{equation*}
    e_i =
    \begin{cases}
      ? & \text{if $i = i^*$} , \\
      0 & \text{if $i \neq i^*$} ,
    \end{cases}
  \end{equation*}
  which is based on Bob's input.
  The $(2k+1)$-th output of the transformer on input $\sigma$ is $f_{i^*}$, which is exactly the correct output for INDEX.
  We show that Alice can send a message to Bob such that, subsequently, Bob can compute the $(2k+1)$-th output of the transformer and hence determine $f_{i^*}$.

  Consider one of the self-attention heads $(Q,K,V)$, and define $\tilde Q := Q \circ \encode$, $\tilde K = K \circ \encode$, and $\tilde V := V \circ \encode$.
  (We leave out the positional arguments in $Q, K, V, \encode$ for brevity.)
  The $(2k+1)$-th output of $\SA_{Q,K,V}((\encode(\sigma_1),\dotsc,\encode(\sigma_n)))$ is
  \begin{equation*}
    Y_{2k+1} =
    \frac{
      \underbrace{
        \sum_{i=1}^k \exp( \tilde Q(?)^\T \tilde K(e_i) ) \tilde V(e_i)
      }_{\text{known to Bob}}
      +
      \underbrace{
        \sum_{i=1}^k \exp( \tilde Q(?)^\T \tilde K(f_i) ) \tilde V(f_i)
      }_{\text{known to Alice}}
      +
      \underbrace{
        \exp( \tilde Q(?)^\T \tilde K(?) ) \tilde V(?)
      }_{\text{known to both}}
    }{
      \underbrace{
        \sum_{i=1}^k \exp( \tilde Q(?)^\T \tilde K(e_i) )
      }_{\text{known to Bob}}
      +
      \underbrace{
        \sum_{i=1}^k \exp(\tilde Q(?)^\T \tilde K(f_i) )
      }_{\text{known to Alice}}
      +
      \underbrace{
        \exp( \tilde Q(?)^\T \tilde K(?) )
      }_{\text{known to both}}
    }
    .
  \end{equation*}
  In order for Bob to compute $Y_{2k+1}$ to $p$ bits of precision,
  it suffices for Alice to send the values
  \begin{equation*}
    \frac
    {
      \sum_{i=1}^k \exp(\tilde Q(?)^\T \tilde K(f_i)) \tilde V(f_i)
    }
    {
      \sum_{i=1}^k \exp(\tilde Q(?)^\T \tilde K(f_i))
    }
    \in \R^m
    \quad\text{and}\quad
    \log\del*{ \sum_{i=1}^k \exp(\tilde Q(?)^\T \tilde K(f_i)) }
    \in \R
  \end{equation*}
  rounded to $O(p)$ bits of precision
  to Bob in a message of size $Cmp$ bits for some constant $C>0$ (see \Cref{sec:precision} for details).
  Such messages can be sent
  for all $h$ self-attention heads simultaneously; in total, Alice sends just $Chmp$ bits to Bob, and after that, Bob computes the output of the transformer and hence determines $f_{i^*}$, thereby solving the INDEX problem.
  Since every protocol for INDEX must require Alice to send at least $k$ bits, we have
  \begin{equation*}
    hmp \geq \frac{k}{C} = \frac{n-1}{2C} = \Omega(n)
    .
    \qedhere
  \end{equation*}
\end{proof}

\bibliographystyle{plainnat}
\bibliography{1hop}

\appendix

\section{Precision details}
\label{sec:precision}

The $j$-th component of $Y_{2k+1}$ can be expressed as $(A+B) / (Z_A + Z_B)$ where
\begin{align*}
  A
  & = \sum_{i=1}^k \exp( (\tilde Q(?)^\T \tilde K(f_i) ) \tilde V(f_i)_j
  ,
  & B
  & = \exp( \tilde Q(?)^\T \tilde K(?) ) \tilde V(?)_j + \sum_{i=1}^k \exp( \tilde Q(?)^\T \tilde K(e_i) ) \tilde V(e_i)_j
  ,
  \\
  Z_A
  & = \sum_{i=1}^k \exp( \tilde Q(?)^\T \tilde K(f_i) )
  ,
  & Z_B
  & = \exp( \tilde Q(?)^\T \tilde K(?) ) + \sum_{i=1}^k \exp( \tilde Q(?)^\T \tilde K(e_i) ) .
\end{align*}
Define $r := A / Z_A$ and $s := \log Z_A$.
Alice's message to Bob contains $\hat r$ and $\hat s$, which are obtained by rounding $r$ and $s$, respectively, to $3p$ bits of precision.
Hence, $\hat r$ and $\hat s$ satisfy $\abs{r - \hat r} \leq \epsilon$ and $\abs{s - \hat s} \leq \epsilon$ where $\epsilon := 2^{-3p}$.
It suffices to show that Bob can approximate $(A+B)/(Z_A+Z_B)$ up to error $2^{-p}$ using
\begin{equation*}
  \frac{\hat r e^{\hat s} + B}{e^{\hat s} + Z_B}
  ,
\end{equation*}
which only depends on $\hat r$, $\hat s$, $B$, and $Z_B$.
Observe that
\begin{equation*}
  \frac{\hat r e^{\hat s} + B}{e^{\hat s} + Z_B} - \frac{A+B}{Z_A+Z_B}
  = 
  \underbrace{
    r \del*{ \frac{e^{\hat s}}{e^{\hat s} + Z_B} - \frac{e^s}{e^s + Z_B} }
  }_{T_1}
  +
  \underbrace{
    \frac{\del{ \hat r - r } e^{\hat s}}{e^{\hat s} + Z_B}
  }_{T_2}
  +
  \underbrace{
    \frac{B}{Z_A + Z_B} \del*{ \frac{e^s - e^{\hat s}}{e^{\hat s} + Z_B} }
  }_{T_3}
  .
\end{equation*}
By assumption on the embedding functions, we may assume without loss of generality that $\abs{\tilde V(\sigma_i)_j} \leq 2^p$ for all $\sigma_i \in \Sigma$.
Therefore
\begin{equation*}
  \abs{r} \leq \max_{i\in[k]} \abs{\tilde V(f_i)_j} \leq 2^p
  \quad \text{and} \quad
  \frac{\abs{B}}{Z_A+Z_B} \leq \frac{\abs{B}}{Z_B} \leq \max\cbr*{ \abs{\tilde V(?)_j} , \max_{i\in[k]} \abs{\tilde V(e_i)_j} } \leq 2^p .
\end{equation*}
We now bound each of $T_1$, $T_2$, and $T_3$ in magnitude.
To bound $\abs{T_1}$, we use the $(1/4)$-Lipschitzness of the sigmoid function:
\begin{equation*}
  \abs{T_1}
  = \abs{r} \abs*{ \frac{e^{\hat s}}{e^{\hat s} + Z_B} - \frac{e^s}{e^s + Z_B} }
  \leq \frac{\abs{r} \abs{\hat s - s}}4
  \leq \frac{2^p\epsilon}{4} .
\end{equation*}
To bound $\abs{T_2}$:
\begin{equation*}
  \abs{T_2}
  = \abs{\hat r - r} \frac{e^{\hat s}}{e^{\hat s} + Z_B} \leq \abs{\hat r - r} \leq \epsilon .
\end{equation*}
Finally, to bound $\abs{T_3}$, we use the approximation $e^t (e^t - 1) \leq 1.25t$ for $t \in \intcc{0,1/8}$:
\begin{equation*}
  \abs{T_3}
  = \frac{\abs{B}}{Z_A + Z_B} \abs*{ \frac{e^s - e^{\hat s}}{e^{\hat s} + Z_B} }
  \leq \frac{\abs{B}}{Z_A + Z_B} \cdot \frac{Z_A}{Z_A+Z_B} \cdot \frac{e^{\abs{\hat s - s}} - 1}{e^{-\abs{\hat s - s}}}
  \leq 2^p \cdot 1.25\epsilon .
\end{equation*}
Therefore
\begin{equation*}
  \abs*{
    \frac{\hat r e^{\hat s} + B}{e^{\hat s} + Z_B}  - \frac{A+B}{Z_A+Z_B}
  }
  \leq \abs{T_1} + \abs{T_2} + \abs{T_3}
  \leq \del*{ \frac32 \cdot 2^p + 1 } \epsilon
  \leq 2^{-p}
  .
\end{equation*}

\end{document}